\definecolor{Urlcolor}{RGB}{251,111,146}
\definecolor{Linkcolor}{RGB}{193,18,31}
\definecolor{CiteColor}{RGB}{32,126,190}
\definecolor{colorOne}{RGB}{72, 35, 116}     
\definecolor{colorTwo}{RGB}{32, 144, 140}    
\definecolor{colorThree}{RGB}{189, 222, 38}  
\newtheorem{proposition}{Proposition} 
\renewcommand{\paragraph}{%
  \@startsection{paragraph}{4}%
  {\z@}{-0.5em}{-0.5em}%
  {\normalfont\normalsize\bfseries}%
}
\def\eg{\emph{e.g.}\xspace} 
\def\ie{\emph{i.e.}\xspace}
\newcommand{\tablestyle}[2]{\setlength{\tabcolsep}{#1}\renewcommand{\arraystretch}{#2}\centering\footnotesize}
\def\eqref#1{equation~\ref{#1}}
\def\1{\bm{1}}
\DeclareMathAlphabet{\mathsfit}{\encodingdefault}{\sfdefault}{m}{sl}
\SetMathAlphabet{\mathsfit}{bold}{\encodingdefault}{\sfdefault}{bx}{n}
\title{
On Vanishing Variance in Transformer Length Generalization
}
\author{Ruining Li\thanks{Equal contribution; each reserves the right to be listed first.} \\
University of Oxford\\
\texttt{ruining@robots.ox.ac.uk} \\
\And
Gabrijel Boduljak$^*$ \\
University of Oxford \\
\texttt{gabrijel@robots.ox.ac.uk} \\
\And
Jensen (Jinghao) Zhou \\
University of Oxford \\
\texttt{jinghao@robots.ox.ac.uk} \\
}
\newcommand\rurl[1]{%
  \href{https://#1}{\nolinkurl{#1}}%
}
\begin{document}
\maketitle
\begin{abstract}
It is a widely known issue that Transformers, when trained on shorter sequences, fail to generalize robustly to longer ones at test time.
This raises the question of whether Transformer models are real \emph{reasoning} engines, despite their impressive abilities in mathematical problem solving and code synthesis.
In this paper, we offer a \emph{vanishing variance} perspective on this issue. To the best of our knowledge, we are the first to demonstrate that even for today's frontier models, a longer sequence length results in a decrease in variance in the output of the multi-head attention modules.
On the $\operatorname{argmax}$ retrieval and dictionary lookup tasks, our experiments show that applying layer normalization after the attention outputs leads to significantly better length generalization.
Our analyses attribute this improvement to a reduction---though not a complete elimination---of the distribution shift caused by vanishing variance.
Project page: \rurl{ruiningli.com/vanishing-variance}.

\end{abstract}
\begin{figure}[hbt!]
    \centering
    \includegraphics[scale=0.3]{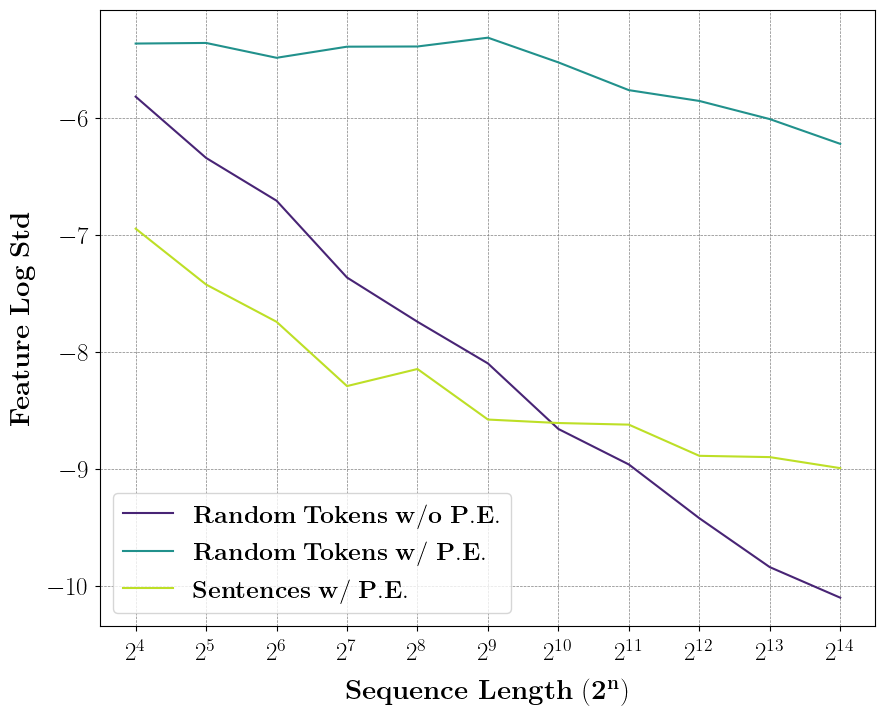}
    \caption{
    \textbf{Standard deviation of a fixed component in attention outputs from the first layer of Llama-3.2-1B (log-log scale) over multiple input sequences of fixed length $N$.}
    Even in the latest LLMs, increasing sequence length $N$ reduces the variance of attended outputs, significantly degrading accuracy on long sequences.
    }
    \label{fig:llama-std}
\end{figure}

\section{Background: Vanishing Variance}%
\label{sec:background}

It is no exaggeration to say that Transformers ~\citep{vaswani2017attention} is the most important architecture in modern deep learning.
It is widely adopted in almost every domain, ranging from natural language~\citep{vaswani2017attention, devlin2019bert, brown2020language} and vision~\citep{dosovitskiy2021an, peebles2023scalable} to audio~\citep{radford2023robust} and protein design~\citep{Jumper2021alphafold2}.
Despite its successes, recent studies~\citep{press2022train, zhou2023algorithms, zhou2024transformers, kazemnejad2024impact, velivckovic2024softmax} in large language models (LLMs) have shown that transformer-based models often struggle with length generalization, an ability that requires the model to generalize to longer sequences than seen during training.
Several prior works have proposed to either refine position encodings~\citep{ruoss2023randomized, zhou2024transformers, kazemnejad2024impact} or adapt the softmax function~\citep{press2022train, velivckovic2024softmax} to improve length generalization.
However, these methods are ad-hoc and lack interpretability, making it more of an art than a science to understand when and why they work.

In this paper, we study the distribution shift that occurs in the intermediate outputs when an attention module trained on shorter sequences is subsequently exposed to longer ones in a zero-shot manner.
We hope that our findings will encourage future research on network architectures that are provably robust (\eg, invariant) to varying sequence lengths.

\paragraph{Background and notations.}
At the core of Transformers is the attention mechanism ~\citep{vaswani2017attention}.
The attention first projects the input sequence
$\mathbf{X} = \left[\mathbf{x}_1 \Vert \mathbf{x}_2 \Vert \dots \Vert \mathbf{x}_N \right ]^{\top} \in \mathbb{R}^{N\times D}$, where $N$ is the sequence length and each item $\mathbf{x}_n \in \mathbb{R}^D$ has $D$ features, into keys $\mathbf{K} = \mathbf{X} \mathbf{W}_K \in \mathbb{R}^{N\times D}$ and values $\mathbf{V} = \mathbf{X} \mathbf{W}_V \in \mathbb{R}^{N\times D}$ using learnable weight matrices $\mathbf{W}_K, \mathbf{W}_V \in \mathbb{R}^{D\times D}$.
Similarly, the query sequence
$\mathbf{Y} = \left[\mathbf{y}_1 \Vert \mathbf{y}_2 \Vert \dots \Vert \mathbf{y}_M \right]^{\top} \in \mathbb{R}^{M\times D}$
is projected into queries $\mathbf{Q} = \mathbf{Y} \mathbf{W}_Q \in \mathbb{R}^{M\times D}$ using $\mathbf{W}_Q\in \mathbb{R}^{D\times D}$.
The attention then computes $\mathbf{O} = \operatorname{softmax} \left (\frac{\mathbf{Q} \mathbf{K}^{\top}}{\sqrt{D}} \right) \mathbf{V} \in \mathbb{R}^{M\times D}$ and projects it using another weight matrix $\mathbf{W}_O \in \mathbb{R}^{D\times D}$ to yield the final result $\operatorname{Attn}(\mathbf{X}, \mathbf{Y}) = \mathbf{O} \mathbf{W}_O$.
In this paper, we use the term ``attention weights'' to refer to the softmax score, \ie, $\operatorname{softmax}(\frac{\mathbf{Q} \mathbf{K}^{\top}}{\sqrt{D}})$, ``attention outputs'' the intermediate $\mathbf{O}$, and $\operatorname{Attn}(\mathbf{X}, \mathbf{Y})$ the final result.

Our main observation is the \emph{vanishing variance} problem: as the sequence length $N$ increases, the variance of attention outputs (computed over multiple input sequences of length $N$) decreases. We formalize this as~\cref{proposition}.
\begin{proposition}[The vanishing variance problem]
\label{proposition}
Consider a \textbf{trained} attention module with weights $\mathbf{W}_Q, \mathbf{W}_K, \mathbf{W}_V, \mathbf{W}_O$. Let $\mathbf{X} = \left[\mathbf{x}_1 \Vert \mathbf{x}_2 \Vert \dots \Vert \mathbf{x}_N \right ]^{\top}$ denote an input sequence of length $N$.
If (1) $\mathbf{x}_1, \mathbf{x}_2, \dots, \mathbf{x}_N \overset{\text{i.i.d}}{\sim} \mathcal{X}$, a distribution over a \textbf{finite} vocabulary, and (2) 
$\mathbb{E}_{\mathbf{x} \sim \mathcal{X}}[\mathbf{W}_V \mathbf{x}] = \mathbf{0}$, then for a \textbf{fixed} query $\mathbf{y}$ and a \textbf{fixed} feature $d$,
\begin{equation*}
    \begin{aligned}
\lim_{N \to \infty}
\operatorname{Var}_{ (\mathbf{x}_{1}, \mathbf{x}_{2}, \ldots, \mathbf{x}_{N})  \sim \mathcal{X}^N}
\left(\left[\operatorname{softmax}\left (\frac{\mathbf{Q}\mathbf{K}^\top}{\sqrt{D}} \right)\mathbf{V}\right]_{d} \right) = 0,
    \end{aligned}
\end{equation*}
where $\mathbf{x}_n, \mathbf{y} \in \mathbb{R}^D$ and $\mathbf{Q} \in \mathbb{R}^{1\times D}, \mathbf{K} \in \mathbb{R}^{N\times D}, \mathbf{V} \in \mathbb{R}^{N\times D}$ are intermediate results in $\operatorname{Attn}(\mathbf{X}, [\mathbf{y}])$.

Informally, for a fixed component $d$ in the attention outputs, its variance over input sequences of length $N$, where each sequence consists of $N$ independently and identically distributed (i.i.d.) tokens, vanishes as $N \to \infty$.
\end{proposition}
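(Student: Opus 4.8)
The plan is to reduce the $d$-th attention-output component to a scalar ratio of sums of i.i.d.\ \emph{bounded} random variables, apply the strong law of large numbers (SLLN) to numerator and denominator separately, and then upgrade the resulting almost-sure convergence to a statement about the variance using a uniform bound. Concretely, for the fixed query $\mathbf{y}$ write the logit $s_n = \frac{1}{\sqrt{D}}\,\mathbf{y}^\top \mathbf{W}_Q \mathbf{W}_K^\top \mathbf{x}_n$ and let $v_n$ denote the $d$-th coordinate of the value vector of token $n$ (a fixed linear functional of $\mathbf{x}_n$), so that
\begin{equation*}
[\mathbf{O}]_d = \left[\softmax\!\left(\tfrac{\mathbf{Q}\mathbf{K}^\top}{\sqrt{D}}\right)\mathbf{V}\right]_d = \frac{\sum_{n=1}^N e^{s_n} v_n}{\sum_{n=1}^N e^{s_n}} = \frac{\tfrac{1}{N}\sum_{n=1}^N e^{s_n} v_n}{\tfrac{1}{N}\sum_{n=1}^N e^{s_n}}.
\end{equation*}
The crucial structural observation is that each pair $(s_n, v_n)$ is a deterministic function of the single token $\mathbf{x}_n$; since the $\mathbf{x}_n$ are i.i.d.\ over a \textbf{finite} vocabulary, the pairs $(s_n, v_n)$ are i.i.d.\ and take only finitely many values, hence are bounded.

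Next, the SLLN applied separately gives, almost surely, $\tfrac{1}{N}\sum_n e^{s_n} v_n \to \E[e^{s} v]$ and $\tfrac{1}{N}\sum_n e^{s_n} \to \E[e^{s}]$, the latter limit being strictly positive because $e^{s} > 0$. Dividing the two limits then yields that $[\mathbf{O}]_d$ converges almost surely to the deterministic constant $c = \E[e^{s}v]/\E[e^{s}]$. The zero-mean hypothesis $\E[\mathbf{W}_V \mathbf{x}] = \mathbf{0}$ serves to pin down this limit (and in the uniform-attention idealization would force $c = 0$); the variance claim itself, however, rests on concentration rather than on the value of $c$.

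The final step, and the one needing care, is that almost-sure convergence to a constant does \emph{not} by itself imply $\Var([\mathbf{O}]_d)\to 0$ --- one needs uniform integrability. This is exactly where the finite-vocabulary hypothesis does the real work: because the softmax weights form a convex combination (they are nonnegative and sum to one) and the values satisfy $|v_n| \le M := \max_{\mathbf{c}} |[\mathbf{W}_V \mathbf{c}]_d|$ over the finite vocabulary, we get $|[\mathbf{O}]_d| \le M$ uniformly in $N$. Since the sequence is uniformly bounded and converges almost surely, bounded convergence gives $\E[[\mathbf{O}]_d] \to c$ and $\E[[\mathbf{O}]_d^2] \to c^2$, whence $\Var([\mathbf{O}]_d) = \E[[\mathbf{O}]_d^2] - \E[[\mathbf{O}]_d]^2 \to 0$. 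I expect this uniform-integrability step to be the main obstacle: without the boundedness supplied by the finite vocabulary, the ratio could in principle carry heavy enough tails that its variance stays bounded away from zero even while it concentrates, so the finite-vocabulary assumption is essential and not merely convenient.
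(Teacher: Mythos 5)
Your proof is correct, and it takes a genuinely different---and in fact more robust---route than the paper's. The paper works directly with second moments: writing the output as $\sum_{n}\mathbf{A}_n\mathbf{V}_{n,d}$, it bounds the variance by $\E\left[\left(\sum_n \mathbf{A}_n\mathbf{V}_{n,d}\right)^2\right]$, expands into diagonal and cross terms, controls the diagonal by $N(\max_n\mathbf{A}_n)^2\sigma_d^2$ (boundedness of the logits on the finite vocabulary plus Lemma 2.1 of \citet{velivckovic2024softmax} gives $\max_n\mathbf{A}_n=O(1/N)$), and attempts to kill the cross terms via independence and the zero-mean assumption. You instead write $[\mathbf{O}]_d$ as a ratio of empirical averages of bounded i.i.d.\ functions of single tokens, apply the SLLN to numerator and denominator, and pass from almost-sure convergence to the constant $c=\E[e^s v]/\E[e^s]$ to vanishing variance via uniform boundedness and bounded convergence. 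Your route buys something real: as you observe, $c$ is generically \emph{nonzero}, since $s_n$ and $v_n$ are functions of the same token and hence correlated, so $\E\left[[\mathbf{O}]_d^2\right]\to c^2>0$ in general; the paper's opening inequality $\Var\le\E[(\cdot)^2]$ discards exactly the squared-mean term needed to cancel this, and its cross-term step is indeed flawed: restricting to pairs with $\mathbf{V}_{k,d}\mathbf{V}_{l,d}\ge 0$ and then factoring gives $\E\left[\mathbf{V}_{k,d}\mathbf{V}_{l,d}\,\mathbf{1}\{\mathbf{V}_{k,d}\mathbf{V}_{l,d}\ge0\}\right]=\E[\mathbf{V}_{k,d}^{+}]\E[\mathbf{V}_{l,d}^{+}]+\E[\mathbf{V}_{k,d}^{-}]\E[\mathbf{V}_{l,d}^{-}]>0$ (with $x^{+}=\max(x,0)$, $x^{-}=\max(-x,0)$), not $\E[\mathbf{V}_{k,d}]\E[\mathbf{V}_{l,d}]=0$, and these $\Theta(N^2)$ positive terms do not vanish. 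A two-token sanity check confirms this: tokens $a,b$ equally likely with logits $1,0$ and values $1,-1$ give $[\mathbf{O}]_d\to(e-1)/(e+1)$ almost surely, so the second moment stays bounded away from zero even though the variance vanishes, exactly as your argument predicts and contrary to what the paper's chain of inequalities would imply. Your proof is immune to this because it retains the mean, and you are also right that assumption (2) is not actually needed for the conclusion under your argument---it only pins down $c$ in the uniform-attention idealization. One minor point of hygiene: the variance at each $N$ is taken over $\mathcal{X}^N$, so before invoking almost-sure convergence you should realize all lengths on a common probability space via a single infinite i.i.d.\ sequence (standard and harmless), and the division step needs $\E[e^s]>0$, which you do note.
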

\begin{proof}
Please refer to~\cref{sec:app_proof}.
\end{proof}
Note that assumptions of~\cref{proposition} are violated in practice. In particular, the independence assumption does not hold in LLMs because of (1) the introduction of positional encoding, and more significantly (2) the nature of language, where preceding words provide important context for those that follow.
In addition, $\mathbb{E}[\mathbf{W}_V \mathbf{x}_i] = \mathbf{0}$ is not strictly enforced.
Nevertheless, we find that even for today's frontier LLMs, the decay in attention output variance, as established in~\cref{proposition}, remains pronounced.
In~\cref{fig:llama-std}, we plot the standard deviation $\sigma$ of a fixed component of the attention outputs from the first layer of Llama-3.2-1B~\citep{llama3modelcard} as a function of input sequence length $N$.
$\sigma$ is computed over $100$ length-$N$ sequences sampled randomly with $3$ strategies:
({\color{colorOne} Random Tokens w/o P.E.}) We sample single tokens i.i.d uniformly at random from the tokenizer's vocabulary, and remove the positional encoding for inference;
({\color{colorTwo} Random Tokens w/ P.E.}) We still sample single tokens i.i.d uniformly at random, but keep the positional encoding at inference time;
({\color{colorThree} Sentences w/ P.E.}) We sample consecutive sentences from a long paragraph\footnote{Obtained from \href{https://github.com/dscape/spell/blob/master/test/resources/big.txt}{https://github.com/dscape/spell/blob/master/test/resources/big.txt}}, and truncate the token sequences to length $N$---such sequences lie \emph{within} the LLM's training distribution.
As can be seen in the log-log plot, for {\color{colorOne} Random Tokens w/o P.E.}, where the independence assumption \emph{does} hold, $\sigma$ scales with input sequence length $N$ roughly as $\sigma \propto N^{-0.5}$.
For {\color{colorTwo} Random Tokens w/ P.E.} and {\color{colorThree} Sentences w/ P.E.}, where such assumption is no longer valid, the downward trend is still obvious.
\section{Layer Normalization for Length Generalization}%
\label{sec:method}


As variance vanishes with longer sequence lengths in attention outputs, we are intrigued to investigate the causes of performance degradation observed in LLMs. To this end, we perform a toy study on the statistical behavior of attention output values.

For simplicity, we consider a one-layer Transformer with single-head attention, omitting residual connections and normalization, following~\citet{velivckovic2024softmax}. 
We adopt this architecture as our \emph{Baseline}.
The model receives a \emph{single} query token and an input sequence of varying length to perform simple algorithmic tasks.
To eliminate confounds from positional encodings, we focus on order-invariant tasks, where the output depends only on the multiset (not the order) of input tokens, including $\operatorname{argmax}$ retrieval and dictionary lookup.
Our goal is to evaluate models trained on shorter sequences using longer (\ie, out-of-distribution in length) sequences to study length generalization.
More details of the model architecture and synthetic data generation are provided in~\cref{sec:supp_details}.

In~\cref{fig:hists}, we visualize the distribution of $5$ individual components in attention outputs $\mathbf{O}$ across multiple input sequences of lengths $2^4$, $2^{12}$ and $2^{14}$, obtained with a model checkpoint trained on sequences of up to length $2^4$.
As can be seen in the top row, testing on out-of-distribution sequence lengths leads to vanishing variance, causing a distribution shift where each individual component of $\mathbf{O}$ becomes more concentrated around its mean.

\begin{figure}[htbp]
    \centering
    \includegraphics[width=\textwidth]{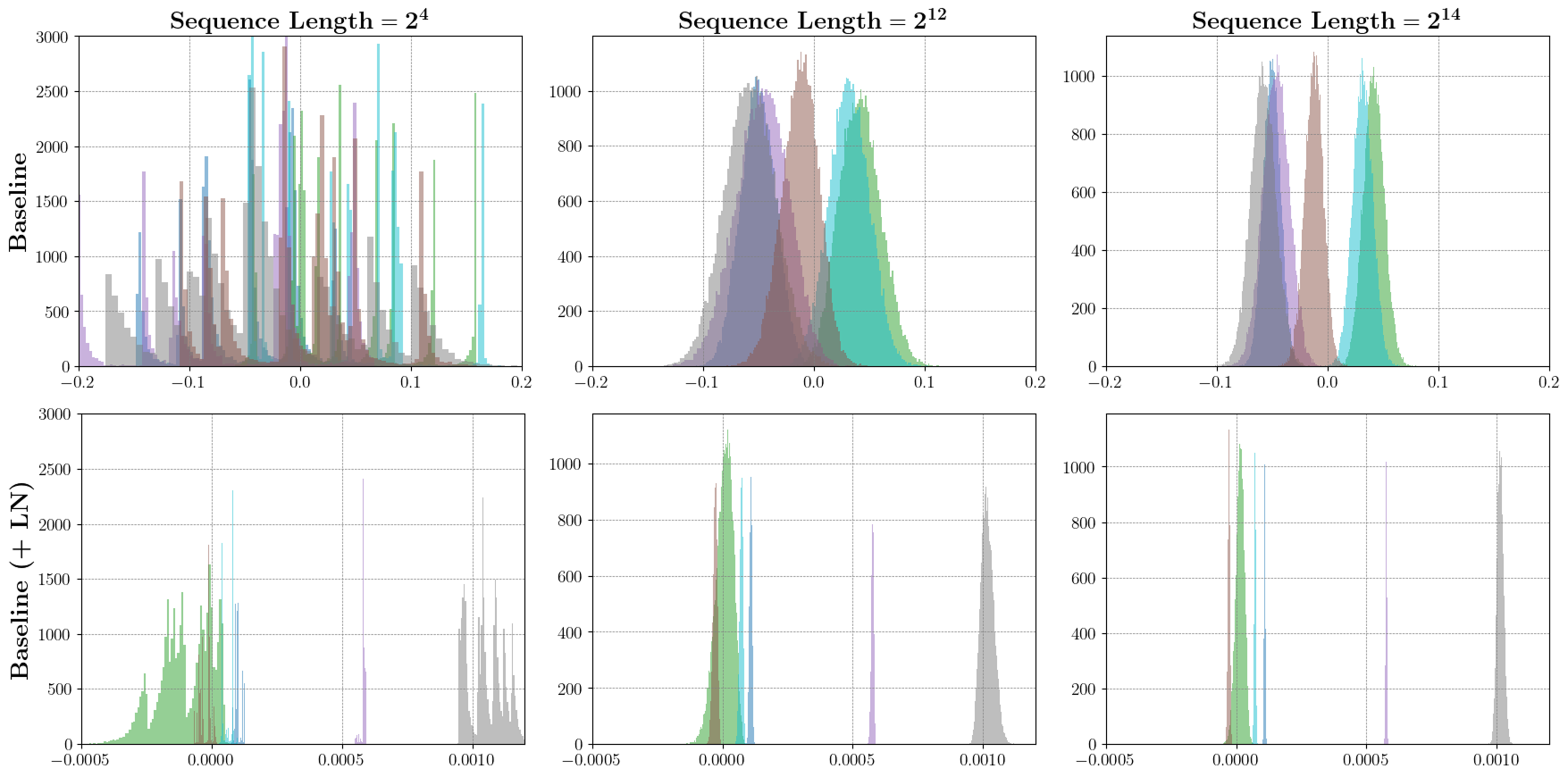}

    \caption{
    \textbf{Distribution of 5 individual features in attention outputs $\mathbf{O}$ across batches.}
    Each color represents a different feature.
    As input sequence length $N$ increases from $2^4$ to $2^{14}$, feature variance decreases, and values concentrate around their mean.
    Layer normalization (bottom) scales and shifts features to maintain relatively constant \emph{global} variance, likely explaining its superior length generalization compared to the \emph{Baseline} (top).
    }
    \label{fig:hists}
\end{figure}

While this distribution shift of individual features is expected according to~\cref{proposition}, we are more interested in the distribution shift of the entire feature vector in $\mathbb{R}^D$, as the whole vector is subsequently input to an MLP to predict the final result. As input sequence length $N$ increases, each feature is less likely to have extreme values (as its distribution is more centered). Consequently, the \emph{global} feature variance, defined as $\sigma_{\text{global}}^2 = \frac{1}{D}\Sigma_{d=1}^D (\mathbf{o}_d - \mu_{\text{global}})^2$ where $\mu_{\text{global}}=\frac{1}{D}\Sigma_{d=1}^D \mathbf{o}_d$ is the global mean, also decreases.
We illustrate this observation in~\cref{fig:distshift} (right), where the global variance decays as $N$ increases.
In~\cref{fig:distshift} (left), we show that in addition to the global variance, the global mean $\mu_{\text{global}}$ also exhibits drift.
Such a distribution shift in attention outputs (and thus MLP inputs) hinders generalization, since the MLP is only trained on features with larger global variance and a different global mean~\citep{zhou2022domain}.

\begin{figure}
    \centering
    \includegraphics[width=0.95\linewidth]{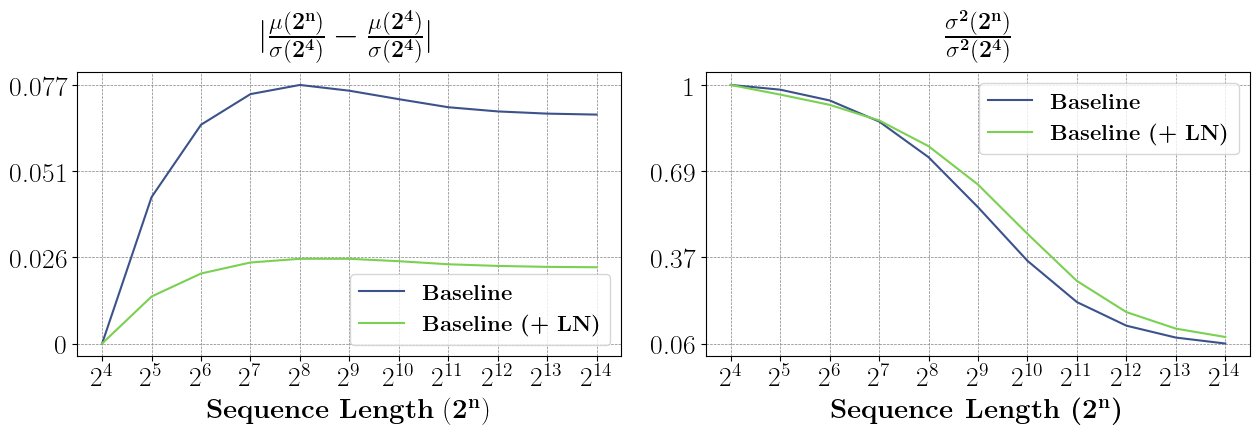}
    \caption{
    \textbf{Layer normalization helps mitigate distribution shift in attention outputs.}
    (\textbf{Left}) shows the drift in global mean as input sequence length deviates from the training distribution. The mean is normalized by the training global variance to eliminate scale differences.
    (\textbf{Right}) shows the decay in global variance.
    All results are averaged across $32$k random input sequences of the fixed length.}
    \label{fig:distshift}
\end{figure}

To mitigate this distribution shift, we explore applying layer normalization~\citep{ba2016layer} immediately after the attention outputs,
\ie, $\operatorname{LayerNorm}(\mathbf{O})_{b, t, d} = \gamma_d \cdot \frac{\mathbf{o}_{b, t, d} - \mathbf{\mu}_{b, t}}{\mathbf{\sigma}_{b, t} + \epsilon} + \beta_d$, where $\mathbf{O}$ is the batched attention outputs, $\mu_{b,t} = \frac{1}{D}\Sigma_{d=1}^D \mathbf{O}_{b,t,d}$, $\sigma_{b,t} = \sqrt{\frac{1}{D}\Sigma_{d=1}^D (\mathbf{O}_{b,t,d} - \mu_{b,t})^2}$, and $\gamma, \beta \in \mathbb{R}^D$ are learnable scale and shift parameters. 
While variance decay in individual features is inevitable (bottom row of \cref{fig:hists}), standardization and learnable scale and shift parameters help stabilize the feature distribution. This adjustment preserves the global mean and variance more effectively as sequence length increases $1000\times$ (\cref{fig:distshift}).
This enhances length generalization, as discussed next.

\section{Experiments}%
\label{sec:experiments}

We consider \emph{two} tasks: $\operatorname{argmax}$ retrieval and dictionary lookup. The former has been considered by~\citet{velivckovic2024softmax}. The latter closely resembles the core function of the attention mechanism (\ie, to retrieve the most relevant information based on the similarity between queries and keys).
As detailed in~\cref{sec:supp_details}, the order of input tokens does not affect the target output in either task.
By deliberately selecting such tasks, we isolate and examine the length generalization capabilities of the attention mechanism itself, independent of any effects introduced by positional encodings~\citep{zhou2024transformers}.
We generate synthetic data (of input sequence length up to $16$) to train the models, and evaluate them on sequences of length up to $2^{14}$.

\subsection{Results and Analysis}
The results presented in~\cref{tab:comparison-argmax} and~\cref{tab:comparison-dict-lookup} indicate that applying layer normalization to attention outputs leads to consistently better accuracy on out-of-distribution sequence lengths, with statistical significance confirmed by a paired $t$-test over $100$ training runs from different random seeds.

Test-time adaptation and fine-tuning are common techniques for improving length generalization in transformers~\citep{anil2022exploring, velivckovic2024softmax}.
To show that the benefits of layer normalization are \emph{orthogonal} to these techniques, we implement the adaptive temperature method from~\citet{velivckovic2024softmax} in both architectures, with and without layer normalization. Combined with test-time adaptation, layer normalization still yields a significant improvement. In~\cref{fig:heatmap}, we demonstrate that layer normalization also mitigates dispersion~\citep{velivckovic2024softmax}.


\begin{figure}[htb!]
    \centering
    \includegraphics[width=\linewidth]{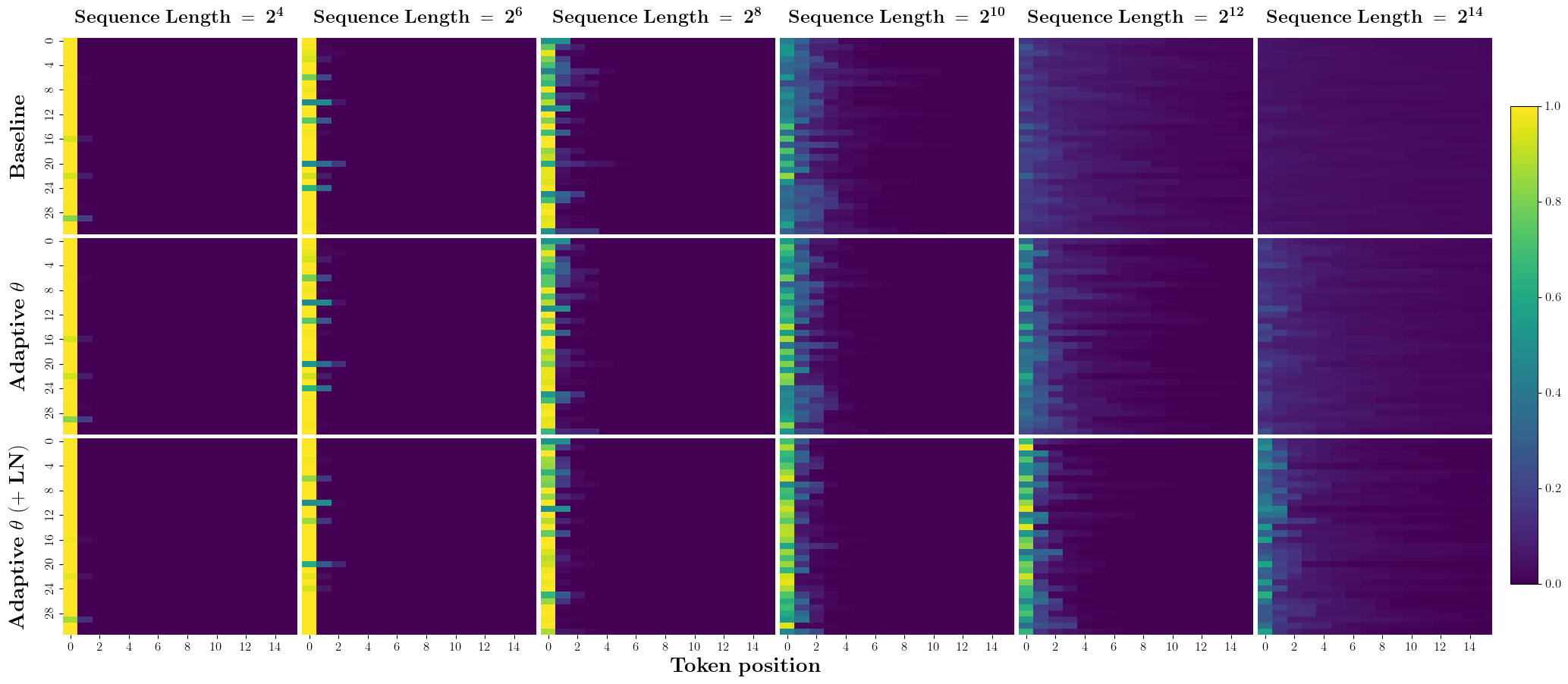}
    \caption{\textbf{Heatmap} of the largest $16$ attention weights, computed over $32$ examples.
    Layer normalization mitigates dispersion, which is inevitable as sequence length increases~\citep{velivckovic2024softmax}.
    }
    \label{fig:heatmap}
\end{figure}

\begin{table}[]
    \caption{
    \textbf{Results (\%) on the $\operatorname{argmax}$ retrieval task.} 
    Results are averaged over $100$ runs with different random seeds. 
    $p$-values are computed using a paired $t$-test.
    Entries highlighted in \colorbox{green!15}{green} indicate those with in-distribution sequence lengths.
    }
    \tablestyle{3.0pt}{1.0}
    \centering
    \begin{tabular}{lccccccccccc}
        \toprule
        {\bf Model} & \cellcolor{green!15} $2^4$ & $2^5$ & $2^6$ & $2^7$ & $2^8$ & $2^9$ & $2^{10}$ & $2^{11}$ & $2^{12}$ & $2^{13}$ & $2^{14}$\\
        \midrule
        \multicolumn{7}{l}{\textit{w.o. test-time adaptation}} \\
        Baseline & \cellcolor{green!15} {$\mathbf{99.6} $} & $99.2 $ & ${98.4} $ & $96.8 $ & $93.7 $ & $88.0 $ & $78.0 $ & $62.5 $ & $44.2 $ & $29.7 $ & $20.8 $ \\
        Baseline (+ LN) & \cellcolor{green!15} $\mathbf{99.6} $ & $\mathbf{99.3} $ & $\mathbf{98.6} $ & $\mathbf{97.4} $ & $\mathbf{94.8} $ & $\mathbf{89.8} $ & $\mathbf{81.0} $ & $\mathbf{66.9} $ & $\mathbf{49.2} $ & $\mathbf{33.0} $ & $\mathbf{22.6} $ \\
        \textcolor{gray}{$p$-value} & \cellcolor{green!15} \textcolor{gray}{$8/10^{1}$} & \textcolor{gray}{$4/10^{2}$} & \textcolor{gray}{$4/10^{2}$} & \textcolor{gray}{$2/10^{2}$} & \textcolor{gray}{$4/ 10^{5}$} & \textcolor{gray}{$2/10^{4}$} & \textcolor{gray}{$1/10^{4}$} & \textcolor{gray}{$2/10^5$} & \textcolor{gray}{$2/10^6$} & \textcolor{gray}{$4/10^5$} & \textcolor{gray}{$3/10^3$} \\
        \midrule
        \multicolumn{7}{l}{\textit{w. test-time adaptation}} \\
        Adaptive $\theta$ & \cellcolor{green!15} {$99.6 $} & $99.2 $ & ${98.5} $ & $96.9 $ & $94.1 $ & $89.1 $ & $81.2 $ & $69.1 $ & $54.2 $ & $39.0 $ & $27.1 $ \\
        Adaptive $\theta$ (+ LN) & \cellcolor{green!15} $\mathbf{99.7} $ & $\mathbf{99.4} $ & $\mathbf{98.7} $ & $\mathbf{97.5} $ & $\mathbf{95.1} $ & $\mathbf{91.0} $ & $\mathbf{84.0} $ & $\mathbf{73.6} $ & $\mathbf{58.9} $ & $\mathbf{43.1} $ & $\mathbf{30.4} $ \\
        \textcolor{gray}{$p$-value} & \cellcolor{green!15} \textcolor{gray}{$7/10^1$} & \textcolor{gray}{$5/10^{4}$} & \textcolor{gray}{$1/10^2$} & \textcolor{gray}{$5/10^5$} & \textcolor{gray}{$4/10^4$} & \textcolor{gray}{$5/10^{5}$} & \textcolor{gray}{$1/10^{4}$} & \textcolor{gray}{$2/10^5$} & \textcolor{gray}{$8/10^5$} & \textcolor{gray}{$4/10^4$} & \textcolor{gray}{$7/10^4$} \\
        \bottomrule 
    \end{tabular}
    \label{tab:comparison-argmax}
\end{table}
\begin{table}[]
    \caption{
    \textbf{Results (\%) on the dictionary lookup task}.
    Results are averaged over $100$ runs with different random seeds. 
    $p$-values are computed using a paired $t$-test. 
    Entries highlighted in \colorbox{green!15}{green} indicate those with in-distribution sequence lengths.
    }
    \tablestyle{2.4pt}{1.0}
    \centering
    \begin{tabular}{lccccccccccc}
        \toprule
        {\bf Model} & \cellcolor{green!15} $2^4$ & $2^5$ & $2^6$ & $2^7$ & $2^8$ & $2^9$ & $2^{10}$ & $2^{11}$ & $2^{12}$ & $2^{13}$ & $2^{14}$\\
        \midrule
        \multicolumn{7}{l}{\textit{w.o. test-time adaptation}} \\
        Baseline & \cellcolor{green!15} {$99.3 $} & $98.6 $ & ${97.3} $ & $94.7 $ & $89.5 $ & $80.4$ & $67.6 $ & $52.9 $ & $38.7 $ & $26.5 $ & $17.8 $ \\
        Baseline (+ LN) & \cellcolor{green!15} $\mathbf{99.4} $ & $\mathbf{98.8} $ & $\mathbf{97.6} $ & $\mathbf{95.3} $ & $\mathbf{90.7} $ & $\mathbf{82.9} $ & $\mathbf{71.7} $ & $\mathbf{57.7} $ & $\mathbf{44.1} $ & $\mathbf{32.3} $ & $\mathbf{22.4} $ \\
        \textcolor{gray}{$p$-value} & \cellcolor{green!15} \textcolor{gray}{$6/10^2$}& \textcolor{gray}{$1/10^2$} & \textcolor{gray}{$5/10^2$} & \textcolor{gray}{$2/10^3$} & \textcolor{gray}{$2/10^{4}$} & \textcolor{gray}{$3/10^{8}$} & \textcolor{gray}{$1/10^{11}$} & \textcolor{gray}{$2/10^{12}$} & \textcolor{gray}{$7/10^{15}$} & \textcolor{gray}{$3/10^{21}$} & \textcolor{gray}{$2/10^{19}$} \\
        \midrule
        \multicolumn{7}{l}{\textit{w. test-time adaptation}} \\
        Adaptive $\theta$ & \cellcolor{green!15} {$99.3 $} & $98.6 $ & ${97.2} $ & $94.5 $ & $89.3 $ & $80.4 $ & $67.8 $ & $52.6 $ & $38.6 $ & $27.3 $ & $20.8 $ \\
        Adaptive $\theta$ (+ LN) & \cellcolor{green!15} $\mathbf{99.4} $ & $\mathbf{98.8} $ & $\mathbf{97.6} $ & $\mathbf{95.4} $ & $\mathbf{90.6} $ & $\mathbf{82.9} $ & $\mathbf{71.7} $ & $\mathbf{57.8} $ & $\mathbf{44.5} $ & $\mathbf{33.4} $ & $\mathbf{27.7} $ \\
        \textcolor{gray}{$p$-value} & \cellcolor{green!15} \textcolor{gray}{$6/10^1$} & \textcolor{gray}{$5/10^2$} & \textcolor{gray}{$3/10^2$} & \textcolor{gray}{$1/10^4$} & \textcolor{gray}{$3/10^{4}$} & \textcolor{gray}{$1/10^{5}$} & \textcolor{gray}{$8/10^{9}$} & \textcolor{gray}{$6/10^{12}$} & \textcolor{gray}{$2/10^{16}$} & \textcolor{gray}{$9/10^{20}$} & \textcolor{gray}{$1/10^{21}$} \\
        \bottomrule 
    \end{tabular}
    \label{tab:comparison-dict-lookup}
\end{table}


\paragraph{Does layer normalization alleviate distribution shift?}
Layer normalization does alleviate---but \emph{not} eliminate---distribution shift. 
With layer normalization, the global mean and global variance remain more stable on out-of-distribution sequence lengths (\cref{fig:distshift}).
However, the variance of fixed components in attention outputs still decays, regardless of layer normalization (\cref{fig:hists}).

\subsection{Ablations}

\label{sec:ablations}
In addition to layer normalization, we explore an alternative normalization strategy in which we \emph{standardize} (\ie, std. in \cref{tab:ablation-argmax}) the attention outputs across the $D$ features without the learnable scale and shift parameters present in LN, \ie, $
\operatorname{Standardize}(\mathbf{O})_{b, t, d} = \frac{\mathbf{O}_{b, t, d} - \mathbf{\mu}_{b, t}}{\mathbf{\sigma}_{b, t} + \epsilon},
$
where $\mu_{b,t}$ and $\sigma_{b,t}$ are computed in the same manner as in layer normalization.

As shown in~\cref{tab:ablation-argmax}, where the relative accuracy gain over \emph{Baseline} on the $\arg\max$ retrieval task is reported, standardization improves length generalization, even though it strictly constrains model capacity.
This underscores the importance (and potential benefits) of addressing the observed distribution shift.
LN outperforms standardization, as confirmed by the paired $t$-test.
Similar ablation results on the dictionary lookup task can be found in~\cref{sec:additional-ablations}.

\begin{table}[htb]
    \caption{
    \textbf{Ablations} on different normalization strategies on the $\operatorname{argmax}$ retrieval task.
    Relative results (\%) compared to the \emph{Baseline} ($\triangle$) are reported.
    }
    \tablestyle{3.0pt}{1.0}
    \centering
    \begin{tabular}{lccccccccccc}
        \toprule
        {\bf Model} & \cellcolor{green!15} $2^4$ & $2^5$ & $2^6$ & $2^7$ & $2^8$ & $2^9$ & $2^{10}$ & $2^{11}$ & $2^{12}$ & $2^{13}$ & $2^{14}$\\
        \midrule
        $\triangle$ (+ std.) & \cellcolor{green!15} $-0.05 $ & $+0.00 $ & $+0.09 $ & $+0.26$ & $+0.60 $ & $+0.84 $ & $+1.62 $ & $+2.26 $ & $+3.05 $ & $+1.80 $ & $+0.70 $ \\
        $\triangle$ (+ LN) & \cellcolor{green!15} $\mathbf{+0.01} $ & $\mathbf{+0.11} $ & $\mathbf{+0.21} $ & $\mathbf{+0.57} $ & $\mathbf{+1.15} $ & $\mathbf{+1.81} $ & $\mathbf{+2.98} $ & $\mathbf{+4.32} $ & $\mathbf{+4.99} $ & $\mathbf{+3.30} $ & $\mathbf{+1.76} $ \\    
        \textcolor{gray}{$p$-value} & \cellcolor{green!15} \textcolor{gray}{$2/10^3$} & \textcolor{gray}{$7/10^4$} & \textcolor{gray}{$1/10^2$} & \textcolor{gray}{$5/10^4$} & \textcolor{gray}{$3/10^4$} & \textcolor{gray}{$3/10^4$} & \textcolor{gray}{$7/10^4$} & \textcolor{gray}{$2/10^4$} & \textcolor{gray}{$3/10^4$} & \textcolor{gray}{$1/10^3$} & \textcolor{gray}{$5/10^3$} \\
        \bottomrule 
    \end{tabular}
    \label{tab:ablation-argmax}
\end{table}



\section{Related Work}

\paragraph{Positional encoding for length generalization.}
Many works have attributed the inability of Transformers to extrapolate to longer sequences to positional encoding.
Several alternatives to the sinusoidal positional encoding originally introduced by \citet{vaswani2017attention} have been proposed to enhance the performance of Transformer-based models in natural language processing (NLP) tasks, including relative positional encoding~\citep{shaw2018self, dai2019transformer}, rotary positional encoding~\citep{su2024roformer}, no positional encoding~\citep{haviv2022transformer} and randomized positional encoding~\citep{ruoss2023randomized}.
Authors have examined the impact of different variants of positional encoding on length generalization~\citep{chi2022kerple, ruoss2023randomized, kazemnejad2024impact, li2024functional, peng2024yarn, zhou2024transformers}.
Unlike prior work that explores positional encoding for length generalization, we focus on algorithmic tasks that are order-invariant.
We present a \emph{vanishing variance} perspective on length generalization which is orthogonal to the extrapolability of positional encoding.

\paragraph{Alternatives to softmax attention.}
The $\operatorname{softmax}$ output has been utilized to interpret the inner workings of Transformers~\citep{xu2015show, choi2016retain, martins2016softmax}.
More recently, \citet{velivckovic2024softmax} demonstrated that the attention weights output by $\operatorname{softmax}$ will \emph{disperse} as sequence length increases, attributing this phenomenon to the Transformer's limited capability in length generalization.
In this paper, we show that this dispersion leads to the vanishing variance problem in the intermediate attention outputs.
While many variants of softmax attention have been introduced~\citep{correia2019adaptively, press2022train, tan2024stick, ye2024differential}, they are motivated mostly by interpretability, rather than the distribution of attention outputs for length generalization.
To the best of our knowledge, none of the existing works have fundamentally eliminated the vanishing variance problem we presented in this paper.
We hope our study can motivate designs of network architectures that are provably invariant to sequence length variations.

\section{Conclusion}%
\label{sec:conclusion}

In this paper, we have introduced the \emph{vanishing variance} problem and provided both theoretical analysis and empirical evidence demonstrating its role in inducing distribution shift in attention outputs. This shift hinders the ability of Transformers to generalize effectively to out-of-distribution sequence lengths.
We demonstrated that mitigating this distribution shift through techniques like layer normalization and standardization---despite potential trade-offs in model expressiveness---significantly improves length generalization in attention models.


\paragraph{Future work.}
We conduct our experiments using a single-layer, single-head attention architecture for simplicity, while real-world models typically use multi-layer, multi-head attention. Our conclusions may not fully generalize to these more complex architectures. Future work may validate the normalization strategies on larger benchmarks like CLRS~\citep{velivckovic2022clrs} and real-world LLMs. 
Moreover, layer normalization only \emph{partially} mitigates distribution shift presented in this paper, and is already widely adopted in Transformers (though not immediately after attention outputs).
Future work may design architectures that are provably invariant to sequence length variations.

\bibliography{main}
\bibliographystyle{iclr2025_conference}

\newpage

\appendix
\setcounter{section}{0} 
\setcounter{proposition}{0}
\renewcommand{\thesection}{\Alph{section}}

\section{Proof of~\cref{proposition}}
\label{sec:app_proof}
\begin{proposition}[The vanishing variance problem]
Consider a \textbf{trained} attention module with weights $\mathbf{W}_Q, \mathbf{W}_K, \mathbf{W}_V, \mathbf{W}_O$. Let $\mathbf{X} = \left[\mathbf{x}_1 \Vert \mathbf{x}_2 \Vert \dots \Vert \mathbf{x}_N \right ]^{\top}$ denote an input sequence of length $N$.
If (1) $\mathbf{x}_1, \mathbf{x}_2, \dots, \mathbf{x}_N \overset{\text{i.i.d}}{\sim} \mathcal{X}$, a distribution over a \textbf{finite} vocabulary, and (2) 
$\mathbb{E}_{\mathbf{x} \sim \mathcal{X}}[\mathbf{W}_V \mathbf{x}] = \mathbf{0}$, then for a \textbf{fixed} query $\mathbf{y}$ and a \textbf{fixed} feature $d$,
\begin{equation*}
    \begin{aligned}
\lim_{N \to \infty}
\operatorname{Var}_{ (\mathbf{x}_{1}, \mathbf{x}_{2}, \ldots, \mathbf{x}_{N})  \sim \mathcal{X}^N}
\left(\left[\operatorname{softmax}\left (\frac{\mathbf{Q}\mathbf{K}^\top}{\sqrt{D}} \right)\mathbf{V}\right]_{d} \right) = 0,
    \end{aligned}
\end{equation*}
where $\mathbf{x}_n, \mathbf{y} \in \mathbb{R}^D$ and $\mathbf{Q} \in \mathbb{R}^{1\times D}, \mathbf{K} \in \mathbb{R}^{N\times D}, \mathbf{V} \in \mathbb{R}^{N\times D}$ are intermediate results in $\operatorname{Attn}(\mathbf{X}, [\mathbf{y}])$.

Informally, for a fixed component $d$ in the attention outputs, its variance over input sequences of length $N$, where each sequence consists of $N$ independently and identically distributed (i.i.d.) tokens, vanishes as $N \to \infty$.
\end{proposition}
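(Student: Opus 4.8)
The plan is to use the finite-vocabulary assumption to rewrite the component $\left[\operatorname{softmax}\!\left(\mathbf{Q}\mathbf{K}^\top/\sqrt{D}\right)\mathbf{V}\right]_{d}$ as a ratio of two empirical averages of i.i.d.\ \emph{bounded} random variables, and then combine the strong law of large numbers with a boundedness argument. First I would observe that, for the fixed query $\mathbf{y}$, the row $\mathbf{Q}$ is deterministic, and each key $\mathbf{k}_n$ and value entry $[\mathbf{V}]_{n,d}$ depends only on the single token $\mathbf{x}_n$. Writing $s(\mathbf{x}_n) := \mathbf{Q}\mathbf{k}_n^\top/\sqrt{D}$ and $v(\mathbf{x}_n) := [\mathbf{x}_n\mathbf{W}_V]_{d}$, the output component becomes
\[
O_d^{(N)} = \frac{\tfrac{1}{N}\sum_{n=1}^N e^{s(\mathbf{x}_n)}\,v(\mathbf{x}_n)}{\tfrac{1}{N}\sum_{n=1}^N e^{s(\mathbf{x}_n)}}.
\]
Because the vocabulary is finite, both $s(\cdot)$ and $v(\cdot)$ take finitely many values, so $e^{s(\mathbf{x}_n)}$ and $e^{s(\mathbf{x}_n)}v(\mathbf{x}_n)$ are bounded i.i.d.\ random variables with finite expectations.

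Next I would apply the strong law of large numbers to the numerator and denominator separately: almost surely the denominator converges to $Z := \mathbb{E}_{\mathbf{x}\sim\mathcal{X}}\!\left[e^{s(\mathbf{x})}\right]$ and the numerator to $\mathbb{E}_{\mathbf{x}\sim\mathcal{X}}\!\left[e^{s(\mathbf{x})}v(\mathbf{x})\right]$. Since each $e^{s(\mathbf{x})}>0$ and the vocabulary is finite, $Z$ is a strictly positive constant, so division is continuous at the limit point; by the continuous mapping theorem $O_d^{(N)} \to c := \mathbb{E}[e^{s(\mathbf{x})}v(\mathbf{x})]/Z$ almost surely, where $c$ is deterministic and independent of $N$. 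To upgrade this almost-sure convergence to convergence of the variance, I would use that $O_d^{(N)}$ is a convex combination (the softmax weights are nonnegative and sum to one) of the value entries $v(\mathbf{x}_n)$, all of which lie in a finite set; hence $|O_d^{(N)}| \le M := \max_{\mathbf{x}}|v(\mathbf{x})| < \infty$ uniformly in $N$. With an almost-sure limit and a uniform bound, the bounded convergence theorem yields $\mathbb{E}[O_d^{(N)}] \to c$ and $\mathbb{E}[(O_d^{(N)})^2] \to c^2$, whence $\operatorname{Var}(O_d^{(N)}) = \mathbb{E}[(O_d^{(N)})^2] - (\mathbb{E}[O_d^{(N)}])^2 \to 0$.

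The main obstacle is that the softmax weights $a_n = e^{s(\mathbf{x}_n)}/\sum_m e^{s(\mathbf{x}_m)}$ are statistically coupled through their shared normalizer, so $O_d^{(N)}$ is \emph{not} a sum of independent terms and its variance cannot be controlled termwise. Decomposing it into the ratio of two genuine empirical averages is precisely what sidesteps this coupling and makes the law of large numbers applicable. The second delicate point is that almost-sure convergence to a constant does not by itself force the variance to vanish; this gap is closed by the uniform boundedness that the finite-vocabulary assumption~(1) supplies, via bounded convergence. I would also remark that the centering assumption~(2), $\mathbb{E}_{\mathbf{x}\sim\mathcal{X}}[\mathbf{W}_V\mathbf{x}] = \mathbf{0}$, is not strictly required for the variance to vanish---it only pins down the value of the limiting mean---so finiteness of the vocabulary is the hypothesis doing the essential work.
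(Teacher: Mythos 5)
Your proof is correct, and it takes a genuinely different route from the paper's. The paper works directly with the second moment of $\sum_{n} \mathbf{A}_n \mathbf{V}_{n,d}$: it splits the square into diagonal and cross terms, disposes of the cross terms using independence of $\mathbf{V}_{k,d}$ and $\mathbf{V}_{l,d}$ together with the centering assumption (2), and controls the diagonal term via Lemma 2.1 of \citet{velivckovic2024softmax}, which gives $\max_n \mathbf{A}_n = O(1/N)$ because the finite vocabulary makes the logits bounded; this yields the quantitative bound $\operatorname{Var} \le C\sigma_d^2/N$. You instead rewrite the output as a ratio of two empirical averages of bounded i.i.d.\ variables, apply the strong law of large numbers and the continuous mapping theorem to get almost-sure convergence to a constant, and upgrade to vanishing variance via the uniform bound $|O_d^{(N)}|\le M$ and bounded convergence. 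Your version is self-contained (no external dispersion lemma) and strictly more general: as you correctly observe, assumption (2) is not needed and only fixes the limiting constant $c$, whereas the paper uses it essentially to cancel cross terms. Your ratio decomposition also cleanly sidesteps the coupling of the attention weights through the shared normalizer, which is exactly the delicate point in the paper's argument---its restriction of the cross-term sum to the random event $\mathbf{V}_{k,d}\mathbf{V}_{l,d}\ge 0$ followed by factorization into $\mathbb{E}[\mathbf{V}_{k,d}]\mathbb{E}[\mathbf{V}_{l,d}]$ is the least rigorous step there, since an expectation restricted to that event is a positive part and need not vanish under (2). What the paper's approach buys in exchange is an explicit rate, $\operatorname{Var} = O(1/N)$, consistent with the empirical $\sigma \propto N^{-0.5}$ scaling in \cref{fig:llama-std}, while your argument establishes the limit without a rate; you could recover one by quantifying the law-of-large-numbers step (e.g.\ second-moment bounds on the two empirical averages combined with the deterministic lower bound $\min_{\mathbf{x}} e^{s(\mathbf{x})} > 0$ on the denominator).
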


\begin{proof}
Let $\mathbf{V}_{n,d} = [\mathbf{W}_V \mathbf{x}_n]_d$. Firstly, we argue that $\mathbf{V}_{k,d}$ and $\mathbf{V}_{l,d}$ are independent for $k \neq l$. Let $\pi_d : \mathbb{R}^D \to \mathbb{R}$ be the projection onto the $d$-th coordinate, namely $\pi_d (\mathbf{v}) = \mathbf{v}_d $ for $\mathbf{v} \in \mathbb{R}^{D}$. Observe that $\mathbf{V}_{k,d} = \pi_{d} (\mathbf{W}_V \mathbf{x}_k) = (\pi_{d} \circ  \mathbf{W}_V) (\mathbf{x}_k) $ and $\mathbf{V}_{l,d} = \pi_{d} (\mathbf{W}_V \mathbf{x}_l) = (\pi_{d} \circ  \mathbf{W}_V) (\mathbf{x}_l) $. By assumption (1), $\mathbf{x}_k$ and $\mathbf{x}_l$ are independent for $k \neq l$. Since $\mathbf{V}_{k,d}$, $\mathbf{V}_{l,d} $ are measurable functions of independent random variables, they are independent for $k \neq l$. By assumption, $\mathbf{x}_k$ and $\mathbf{x}_l$ are identically distributed   for every $k, l$, so $\mathbf{V}_{k, d}$ and  $\mathbf{V}_{l, d}$ are identically distributed. Thus, $ \operatorname{Var} [\mathbf{V}_{k, d} ]$ depends only on $d$, not on $k$. Set $\sigma_d^2 =  \operatorname{Var} [\mathbf{V}_{k, d} ]$. $\sigma_d^2$ is finite since the vocabulary is finite and thus compact and bounded.

Let $\mathbf{A}$ denote the attention weights $\operatorname{softmax}\left (\frac{\mathbf{Q}\mathbf{K}^\top}{\sqrt{D}} \right) \in \mathbb{R}^{1\times N}$ as the query sequence $[\mathbf{y}]$ consists of only a \emph{single} item. Let $\mathbf{A}_n$ denote the $n$-th element of $\mathbf{A}$. We have
\begin{equation*}
    \begin{aligned}
        \operatorname{Var} \left ( \sum_{n=1}^N \mathbf{A}_{n} \mathbf{V}_{n, d} \right) &= \mathbb{E} \left [ \left ( \sum_{n=1}^N \mathbf{A}_{n} \mathbf{V}_{n, d} \right)^2  \right] - \left ( \mathbb{E} \left [\sum_{n=1}^N \mathbf{A}_{n} \mathbf{V}_{n, d}  \right] \right)^2  \\
        &\leq \mathbb{E} \left [ \left ( \sum_{n=1}^N \mathbf{A}_{n} \mathbf{V}_{n, d} \right)^2  \right] \\
        &= \mathbb{E} \left [  \sum_{n=1}^N \mathbf{A}_{n}^2 \mathbf{V}_{n, d}^2  \right] +  \mathbb{E} \left [  \sum_{ 1\leq k, l \leq N, k \neq l}  \mathbf{A}_{k} \mathbf{A}_{l} \mathbf{V}_{k, d} \mathbf{V}_{l, d} \right] \\
        &\leq \mathbb{E}  \left [ \sum_{n=1}^N (\operatorname{max}_{1 \leq n \leq N}  \mathbf{A}_{n})^2  \mathbf{V}_{n, d}^2 \right] + \mathbb{E} \left [  \sum_{ 1\leq k, l \leq N, k \neq l, \mathbf{V}_{k, d} \mathbf{V}_{l, d} \geq 0  }  \mathbf{A}_{k} \mathbf{A}_{l} \mathbf{V}_{k, d} \mathbf{V}_{l, d} \right] \\
        &\leq (\operatorname{max}_{1 \leq n \leq N}  \mathbf{A}_{n})^2   \sum_{n=1}^N \mathbb{E} \left [  \mathbf{V}_{n, d}^2 \right ] + \mathbb{E} \left [   \sum_{ 1\leq k, l \leq N, k \neq l, \mathbf{V}_{k, d} \mathbf{V}_{l, d} \geq 0}  \mathbf{V}_{k, d} \mathbf{V}_{l, d}  \right ] \\
        &\leq N (\operatorname{max}_{1 \leq n \leq N}  \mathbf{A}_{n})^2  \sigma_d^2  +  \sum_{ 1\leq k, l \leq N, k \neq l, \mathbf{V}_{k, d} \mathbf{V}_{l, d} \geq 0}  \mathbb{E} \left [  \mathbf{V}_{k, d} \mathbf{V}_{l, d}  \right ] \\ 
        &\leq N  (\operatorname{max}_{1 \leq n\leq N}  \mathbf{A}_{n})^2  \sigma_d^2 +  \sum_{ 1\leq k, l \leq N, k \neq l, \mathbf{V}_{k, d}, \mathbf{V}_{l, d} \geq 0}  \mathbb{E} \left [  \mathbf{V}_{k, d}   \right ]   \mathbb{E} \left [  \mathbf{V}_{l, d}   \right ] \\ 
        &\leq N  (\operatorname{max}_{1 \leq n \leq N}  \mathbf{A}_{n})^2  \sigma_d^2 
    \end{aligned}
\end{equation*}

In the derivation above, we used the fact that $\mathbf{A}_{n} \in [0,1]$ and that for $k \neq l$, $\mathbf{V}_{k,d}$ and $\mathbf{V}_{l,d} $ are independent. We also used the assumption that for every $1 \leq k \leq N$, $ \mathbb{E} \left [  \mathbf{V}_{k, d}   \right ] = 0$. Since the tokens come from a finite dictionary, and since $\mathbf{x} \to  \mathbf{W}_Q \mathbf{x}$ and  $\mathbf{x} \to  \mathbf{W}_K \mathbf{x}$ are continuous functions on compact domain (dictionary is finite), the logits $\langle \mathbf{W}_Q \mathbf{x}_i, \mathbf{W}_K \mathbf{x}_j \rangle$ are bounded, because they are continuous image of a compact set and every compact set on the real line is closed and bounded.
By Lemma 2.1 of \citet{velivckovic2024softmax}, there exist a constant $C > 0$ and $N_{0} \in \mathbb{N}$, such that for every $N \geq N_{0}$, $ (\operatorname{max}_{1 \leq n \leq N}  \mathbf{A}_{n})^2 < \frac{C}{N^2}$. Then for every  $N \geq N_{0}$, 

\begin{equation*}
    \begin{aligned}
        \operatorname{Var} \left ( \sum_{n=1}^N \mathbf{A}_{n} \mathbf{V}_{n, d} \right) 
        &\leq N \sigma^2 \frac{C}{N^2} =\sigma^2 \frac{C}{N}.
    \end{aligned}
\end{equation*}


Let $\epsilon > 0$. There exists $N_1 \in \mathbb{N}$ such that for every $N \geq N_1$, $\sigma^2 \frac{C}{N} < \epsilon$. Then for every $N \geq \operatorname{max} (N_0, N_1)$, 
\begin{equation*}
    \begin{aligned}
        \operatorname{Var} \left ( \sum_{n=1}^N \mathbf{A}_{n} \mathbf{V}_{n, d} \right)
        &< \epsilon.
    \end{aligned}
\end{equation*}
\end{proof}

\setlength{\tabcolsep}{1pt}

\begin{table}[]
    \caption{
    \textbf{Results (\%) on the $\operatorname{argmax}$ retrieval task.} 
    Results are averaged over $100$ runs with different random seeds. 
    $p$-values are computed using a paired $t$-test.
    Entries highlighted in \colorbox{green!15}{green} indicate those with in-distribution sequence lengths.
    }
    \tablestyle{2.4pt}{1.0}
    \centering
    \begin{tabular}{lccccccccccc}
        \toprule
        {\bf Model} & \cellcolor{green!15} $2^4$ & \cellcolor{green!15} $2^5$ & \cellcolor{green!15} $2^6$ & \cellcolor{green!15} $2^7$ & \cellcolor{green!15} $2^8$ & $2^9$ & $2^{10}$ & $2^{11}$ & $2^{12}$ & $2^{13}$ & $2^{14}$\\
        \midrule
        Baseline & \cellcolor{green!15} {$99.8$} & \cellcolor{green!15} $99.8$ & \cellcolor{green!15} ${99.6}$ & \cellcolor{green!15} $99.3$ & \cellcolor{green!15} $98.5$ & $97.1$ & $94.4$ & $89.1$ & $79.9$ & $65.8$ & $47.8$ \\
        Baseline (+ LN) & \cellcolor{green!15} $\mathbf{99.9}$ & \cellcolor{green!15} $\mathbf{99.9}$ & \cellcolor{green!15} $\mathbf{99.8}$ & \cellcolor{green!15} $\mathbf{99.5}$ & \cellcolor{green!15} $\mathbf{99.1}$ & $\mathbf{98.1}$ & $\mathbf{96.2}$ & $\mathbf{92.8}$ & $\mathbf{86.3}$ & $\mathbf{75.1}$ & $\mathbf{58.7}$ \\
        \midrule
        \textcolor{gray}{$p$-value} & \cellcolor{green!15} $\textcolor{gray}{1/10^{8}}$ & \cellcolor{green!15} $\textcolor{gray}{6/10^6}$ & \cellcolor{green!15} $\textcolor{gray}{1/10^{4}}$ & \cellcolor{green!15} $\textcolor{gray}{2/10^{4}}$ & \cellcolor{green!15} $\textcolor{gray}{9/10^{8}}$ & $\textcolor{gray}{5/10^{8}}$ & $\textcolor{gray}{3/10^8}$ & $\textcolor{gray}{5/10^{10}}$ & $\textcolor{gray}{1/10^{11}}$ & $\textcolor{gray}{3/10^{12}}$ & $\textcolor{gray}{1/10^{13}}$ \\
        \bottomrule 
    \end{tabular}
    \label{tab:comparison-256-argmax}
\end{table}

\setlength{\tabcolsep}{1pt}

\begin{table}[]
    \caption{
    \textbf{Results (\%) on the dictionary lookup task}.
    Results are averaged over $100$ runs with different random seeds. 
    $p$-values are computed using a paired $t$-test. 
    Entries highlighted in \colorbox{green!15}{green} indicate those with in-distribution sequence lengths.
    }
    \tablestyle{2.4pt}{1.0}
    \centering
    \begin{tabular}{lccccccccccc}
        \toprule
        {\bf Model} & \cellcolor{green!15} $2^4$ & \cellcolor{green!15} $2^5$ & \cellcolor{green!15} $2^6$ & \cellcolor{green!15} $2^7$ & \cellcolor{green!15} $2^8$ & $2^9$ & $2^{10}$ & $2^{11}$ & $2^{12}$ & $2^{13}$ & $2^{14}$\\
        \midrule
        Baseline & \cellcolor{green!15} {$\mathbf{99.9}$} & \cellcolor{green!15} $\mathbf{99.9}$ & \cellcolor{green!15} ${99.8}$ & \cellcolor{green!15} $99.7$ & \cellcolor{green!15} $99.5$ & $99.1$ & $98.3$ & $96.5$ & $93.5$ & $87.7$ & $77.8$\\
        Baseline (+ LN) & \cellcolor{green!15} $\mathbf{99.9}$ & \cellcolor{green!15} $\mathbf{99.9}$ & \cellcolor{green!15} $\mathbf{99.9}$ & \cellcolor{green!15} $\mathbf{99.8}$ & \cellcolor{green!15} $\mathbf{99.6}$ & $\mathbf{99.3}$ & $\mathbf{98.7}$ & $\mathbf{97.5}$ & $\mathbf{95.0}$ & $\mathbf{90.4}$ & $\mathbf{82.6}$ \\
        \midrule
        \textcolor{gray}{$p$-value} & \cellcolor{green!15} $\textcolor{gray}{2/10^{1}}$ & \cellcolor{green!15} $\textcolor{gray}{1/10^2}$ & \cellcolor{green!15} $\textcolor{gray}{5/10^2}$ & \cellcolor{green!15} $\textcolor{gray}{2/10^3}$ & \cellcolor{green!15} $\textcolor{gray}{3/10^3}$ & $\textcolor{gray}{6/10^{5}}$ & $\textcolor{gray}{4/10^{7}}$ & $\textcolor{gray}{8/10^9}$ & $\textcolor{gray}{2/10^{10}}$ & $\textcolor{gray}{2/10^{10}}$ & $\textcolor{gray}{2/10^{13}}$ \\
        \bottomrule 
    \end{tabular}
    \label{tab:comparison-256-dict-lookup}
\end{table}
\setlength{\tabcolsep}{1pt}

\begin{table}[]
    \caption{
    \textbf{Ablations} on different normalization strategies on the dictionary lookup task.
    Relative results (\%) compared to the \emph{Baseline} ($\triangle$) are reported.
    }
    \tablestyle{3.0pt}{1.0}
    \centering
    \begin{tabular}{lccccccccccc}
        \toprule
        {\bf Model} & \cellcolor{green!15} $2^4$ & $2^5$ & $2^6$ & $2^7$ & $2^8$ & $2^9$ & $2^{10}$ & $2^{11}$ & $2^{12}$ & $2^{13}$ & $2^{14}$\\
        \midrule
        $\triangle$ (+ std.) & \cellcolor{green!15} $\mathbf{+0.09}$ & $+0.14$ & $+0.23$ & $+0.54$ & $+1.08$ & $+2.27$ & $+3.49$ & $+4.78$ & $+5.14$ & $+5.65$ & $\mathbf{+4.57}$ \\
        $\triangle$ (+ LN) & \cellcolor{green!15} $\mathbf{+0.09}$ & $\mathbf{+0.20}$ & $\mathbf{+0.30}$ & $\mathbf{+0.64}$ & $\mathbf{+1.22}$ & $\mathbf{+2.55}$ & $\mathbf{+4.06}$ & $\mathbf{+4.86}$ & $\mathbf{+5.38}$ & $\mathbf{+5.72}$ & $+4.51$ \\    
        \midrule
        \textcolor{gray}{$p$-value} & \cellcolor{green!15} $\textcolor{gray}{1.0}$ & $\textcolor{gray}{0.3}$ & $\textcolor{gray}{0.6}$ & $\textcolor{gray}{0.6}$ & $\textcolor{gray}{0.6}$ & $\textcolor{gray}{0.5}$ & $\textcolor{gray}{0.3}$ & $\textcolor{gray}{0.9}$ & $\textcolor{gray}{0.6}$ & $\textcolor{gray}{0.9}$ & $\textcolor{gray}{0.9}$\\
        \bottomrule 
    \end{tabular}
    \label{tab:ablation-dict-lookup}
\end{table}
\section{More Experimental Details}
\label{sec:supp_details}

\subsection{Implementation Details}
\paragraph{$\operatorname{argmax}$ retrieval.}
We follow~\citet{velivckovic2024softmax} and train the same neural network architecture in PyTorch for $100,000$ gradient steps with the same hyper-parameter setup. We also follow~\citet{velivckovic2024softmax} to generate data of varying number of items to train and test the model.

\paragraph{$\operatorname{dictionary}$  lookup.} The network architecture is the same as the $\arg\max$ retrieval task. We generate data for training and evaluation in the following way: for each item of the length-$N$ sequence, we sample a value class $c_V \sim \mathcal{U}\{1, \dots, C_V\}$ i.i.d at random; each item also has a key class $1 \leq c_K \leq C_K$. The key classes of all $N$ items in the sequence are sampled \emph{without} replacement.
In our experiments, $C_K = 16384$ and $C_V = 64$.

The features of each item $\mathbf{x}_i$ is defined as $\operatorname{Emb}_K(c_K) \parallel \operatorname{Emb}_V(c_V)$, \ie, the concatenation of the embeddings of the key class and the value class. The embedding vectors for each (key and value) class are optimized jointly with the attention network.

The query sequence in our case is guaranteed to be of length $1$. We sample a key class present in the input sequence and use its embedding vector as the query.

For this task, we found that the optimization usually converges within $10,000$ gradient steps. We train the attention network, together with the embedding vectors, in PyTorch for $10,000$ steps with the same hyper-parameter setup as the $\arg\max$ task.

\subsection{Results When Training on More Diverse Sequence Lengths}
To validate the utility of normalization when the length gap between the training sequences and the test sequences is smaller, we follow the same experimental setup as in~\cref{sec:experiments}, but sample sequences of up to $256$ items during training.
We found it beneficial to gradually increase the length of the sequences sampled throughout training, as is commonly done during pre-training of frontier LLMs~\citep{llama3modelcard}.
The results are reported in~\cref{tab:comparison-256-argmax} and~\cref{tab:comparison-256-dict-lookup}.
With layer normalization, the accuracies on out-of-distribution sequence lengths are significantly higher than without on both tasks, demonstrating the importance of normalization for length generalization over various training settings.
\subsection{Ablations on the Dictionary Lookup Task}
\label{sec:additional-ablations}
Ablation results on the dictionary lookup task are shown in~\cref{tab:ablation-dict-lookup}, which are consistent with the results on the $\arg\max$ retrieval task presented in~\cref{sec:ablations}.
However, on this task, the performance of standardization and layer normalization is more similar, as indicated by the larger $p$-values, suggesting weaker statistical evidence for a significant difference.

\end{document}